\DeclareMathOperator{\EX}{\mathbb{E}}
\tikzstyle{block}=[draw opacity=0.7,line width=1.4cm]
\definecolor{CranJ}{cmyk}{0,0.69,0.54,0.04} 
\definecolor{PinkJ}{cmyk}{0,0.71,0.43,0.12} 
\definecolor{Cran}{cmyk}{0,0.73,0.41,0.29} 
\definecolor{VRed}{cmyk}{0,0.75,0.25,0.2} 
\definecolor{ORed}{cmyk}{0,0.75,0.75,0} 
\definecolor{CBlue}{cmyk}{1,0.25,0,0} 
\title{\LARGE \bf Federated Learning Using Variance Reduced Stochastic Gradient for Probabilistically Activated Agents }
\author{Mohammadreza Rostami and Solmaz S. Kia, \emph{Senior Member, IEEE} %
  \thanks{The authors are with the Department of Mechanical and Aerospace Engineering, University of California Irvine, Irvine, CA 92697,  
    {\tt\small \{mrostam2,solmaz\}@uci.edu}. This work was supported by NSF, under CAREER Award ECCS-1653838.}%
}
\newcommand{\real}{{\mathbb{R}}} \newcommand{\reals}{{\mathbb{R}}}
\newcommand{\realpositive}{{\mathbb{R}}_{>0}}
\newcommand{\realnonnegative}{{\mathbb{R}}_{\ge 0}}
\newcommand{\eps}{\epsilon}
\newcommand{\vect}[1]{\boldsymbol{\mathbf{#1}}}
\newtheorem{thm}{Theorem}[section]
\newtheorem{lem}{Lemma}[section]
\newtheorem{assump}{Assumption}
\newcommand{\oprocendsymbol}{\hbox{$\bullet$}}
\newcommand{\oprocend}{\relax\ifmmode\else\unskip\hfill\fi\oprocendsymbol}
\begin{document}\fontsize{10}{11.1}\rm

\maketitle
\thispagestyle{empty}
\pagestyle{empty}

\begin{abstract}
This paper proposes an algorithm for Federated Learning (FL) with a two-layer structure that achieves both variance reduction and a faster convergence rate to an optimal solution in the setting where each agent has an arbitrary probability of selection in each iteration. In distributed machine learning, when privacy matters, FL is a functional tool. Placing FL in an environment where it has some irregular connections of agents (devices), reaching a trained model in both an economical and quick way can be a demanding job. The first layer of our algorithm corresponds to the model parameter propagation across agents done by the server. In the second layer, each agent does its local update with a stochastic and variance-reduced technique called Stochastic Variance Reduced Gradient (SVRG). We leverage the concept of variance reduction from stochastic optimization when the agents want to do their local update step to reduce the variance caused by stochastic gradient descent (SGD). We provide a convergence bound for our algorithm which improves the rate from $O(\frac{1}{\sqrt{K}})$ to $O(\frac{1}{K})$ by using a constant step-size. We demonstrate the performance of our algorithm using numerical examples.
\end{abstract}

\section{Introduction}
 In recent years, with the technological advances in modern smart devices, each phone, tablet, or smart home system, generates and stores an abundance of data, which, if harvested collaboratively with other users' data, can lead to learning models that support many intelligent applications such as smart health and image classification \cite{DCN-QVP-PNP-MD-AS-ZL-OAD-WJH:22,YZ-ML-LL-NS-DC-VC:18}. Standard traditional machine learning approaches require centralizing the training data on one machine, cloud, or in a data center. However, the data collected on modern smart devices are often of sensitive nature that discourages users from relying on centralized solutions. 
Federated Learning (FL)~\cite{QY-YL-YC-YK-TC-HY:19,HBM-EM-DR-SH-BAA:17} has been proposed to decouple the ability to do machine learning from the need to store the data in a centralized location. The idea of Federated Learning is to enable smart devices to collaboratively learn a shared prediction model while keeping all the training data on the device.

\begin{figure}[t]
\centering
    \includegraphics[scale=0.25]{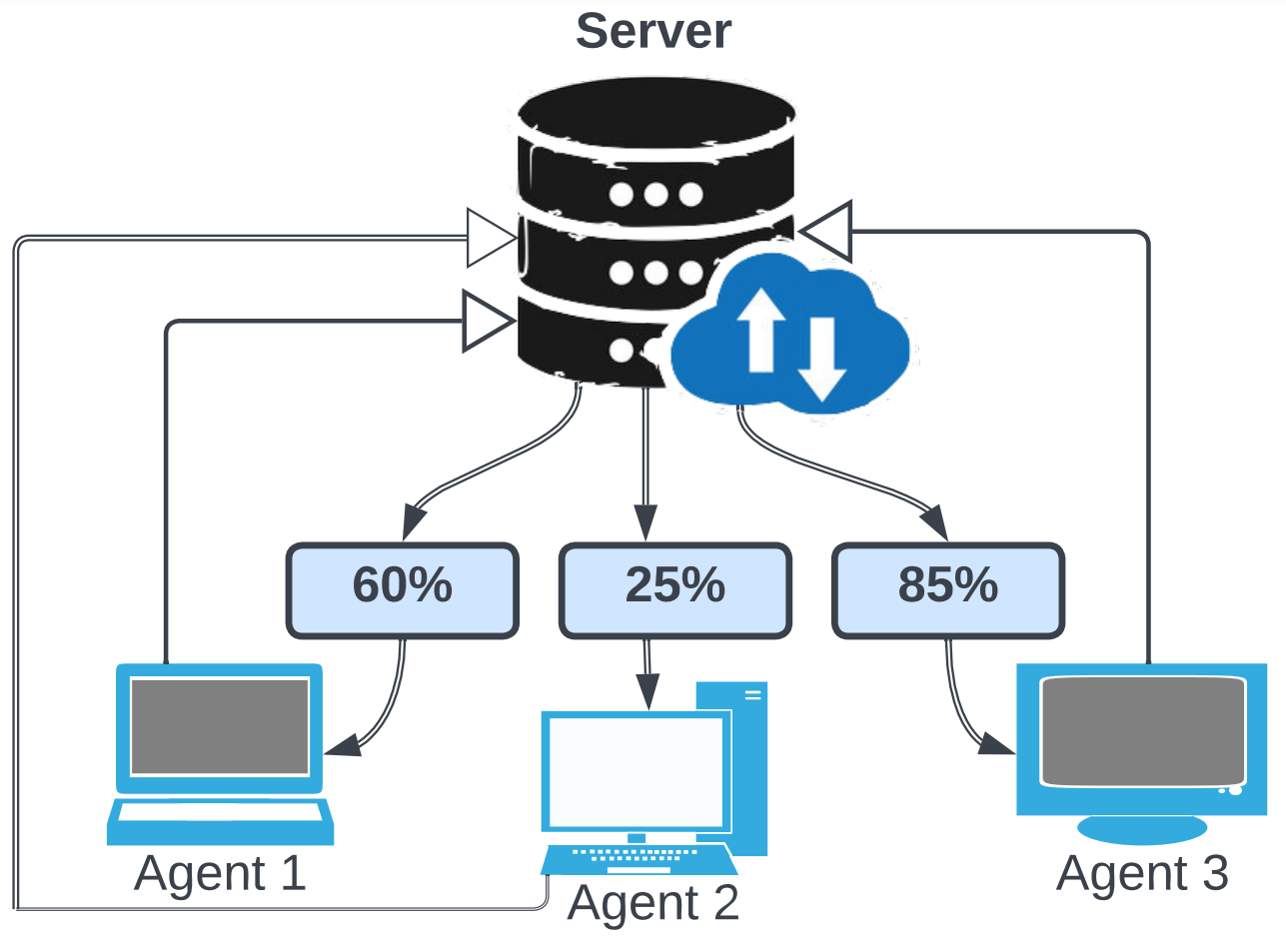}
    \caption{{\small Federated Learning structure with non-uniform probability of agent selection in each iteration}.}
    \label{fig:schematic}
\end{figure}

Figure~\ref{fig:schematic} shows a schematic representation of an FL architecture. 
In FL, collaborative learning without data sharing is accomplished by each agent receiving a current model weight from the server. Then, each participating learning separately updates the model by implementing a stochastic gradient descent (SGD) \cite{RX-SK-UAK:20} using its own locally collected datasets. Then, the participating agents send their locally calculated model weights to a server/aggregator, which often combines the models through a simple averaging, as in FedAvg \cite{HBM-EM-DR-SH-BAA:17}, to be sent back to the agents. The process repeats until a satisfactory model is obtained. Federated learning relies heavily on communication between learner agents (clients) and a moderating server. Engaging all the clients in the learning procedure at each time step of the algorithm results in huge communication cost. On the other hand, poor channel quality and intermittent connectivity can completely derail training. For resource management, in the original popular FL algorithms such as FedAvg in \cite{HBM-EM-DR-SH-BAA:17}, at each round of the algorithm, a batch of agents are selected
uniformly at random to receive the updated model weights and perform local learning. FedAvg and similar FL algorithms come with convergence guarantees~\cite{XL-KH-WY-SW-ZZ:19,AM-RJ-GJP-HH:21,TL-AKS-MZ-MS-AT-VS:20,SW-TT-TS-KKL-CM-TH-KC:19} under the assumption of availability of the randomly selected agents at each round.
However, in practice due to factors such as energy and time constraints, agents' availability is not ubiquitous at all times. Thus, some works have been done to solve this problem via device scheduling~\cite{HHY-ZL-TQQ-HVP:19,YJC-JW-GJ:20,YR-XZ-SCL-CJW:21,HY-MF-JL:21,MC-ZY-WS-CY-HVP-SC:21}. Nevertheless, the agents' availability can be a function of unforeseen factors such as communication channel quality, and thus is not deterministic and known in advance. 

To understand the effect of an agent's stochastic availability on the FL, recent work such as~\cite{JP-SW-MJ-KSC:22} proposed to move from random batch selection to an FL model where the agents availability and participation at each round are probabilistic, see Fig.~\ref{fig:schematic}. In this paper, we adopt this newly proposed framework and contribute to devising an algorithm that  achieve faster convergence and lower error covariance. Our focus will be on incorporating a variance reduction procedure into the local SGD procedure of participating learner agents at each round. The randomness in SGD algorithms induces variance of the gradient, which leads to decay learning rate and sub-linear convergence rate. Thus, there has been an effort to reduce the variance of the stochastic gradient, which resulted in the so-called \emph{Stochastic Variance Reduced Gradient} (SVRG) methods. It is shown that SVRG allows using a constant learning rate and results in linear convergence in expectation.

In this paper, we incorporate an SVRG approach in an FL algorithm where agents' participation in the update process in each round is probabilistic and non-uniform. Through rigorous analysis, we show that the proposed algorithm has a faster convergence rate. In particular, we show that our algorithm results in a practical convergence in expectation with a rate $O(\frac{1}{K})$, which is an improvement over the sublinear rate of $O(\frac{1}{\sqrt{K}})$ in~\cite{JP-SW-MJ-KSC:22}. We demonstrate the effectiveness of our proposed algorithm through a set of numerical studies and by comparing the rate of convergence, covariance, and the circular error probable (CEP) measure. Our results show that our algorithm drastically improves the convergence guarantees, thanks to the decrease in the variance, which results in faster convergence. 

\emph{Organization}: Section~\ref{sec::prelim} introduces our basic notation, 
and presents some of the properties of smooth functions. Section~\ref{se:problem-statement}
presents the problem formulation and the structure behind it.
Section \ref{section::fedavg} includes the proposed algorithm and its scheme.
Section~\ref{sec::main} contains our convergence analysis for the proposed algorithm and provides its convergence rate.
Section~\ref{sec::num} presents simulations and
Section~\ref{sec::conclu} gathers our conclusions and ideas for future
work. For the convenience of the reader, we provide some of the proofs in the Appendix.

\section{Preliminaries}\label{sec::prelim} 
In this section, we introduce our notations and terminologies used throughout the paper. We let $\reals$, $\realpositive$, $\realnonnegative$, denote the set of real, positive real numbers. Consequently, when $x\in\real$, $|x|$ is its absolute value. For $\vect{x}\in\reals^d$,
$\|\vect{x}\|=\sqrt{\vect{x}^\top\vect{x}}$ denotes the standard Euclidean norm. We let $\langle.,.\rangle$ denotes an inner product between two vectors for two vectors $x$ and $y$ $\in \real^d$.
A differentiable function $f$: $\real^d \rightarrow \real$ is Lipschitz with constant $\mathsf{L}\in\real_{>0}$, or simply $\mathsf{L}$-Lipschitz, over a set $\mathcal{C} \subseteq \real^d$ if and only if $\|f(x)-f(y)\| \leq \mathsf{L} \|x-y\|$, for $x, y \in \mathcal{C}$. Furthermore, if the function is differentiable, we have  \mbox{$f(y)\leq f(x) + \nabla f^\top(y-x)+\frac{\mathsf{L}}{2}\|y-x\|^2$} for all \mbox{$x, y\in \mathcal{C}$}~\cite{DPB:97}.
Lastly, we recall Jensen’s inequality, which states~\cite{DSB:09}:
\begin{align}
   \left \|\frac{1}{N}\sum\nolimits_{n=1}^{N}\!x_n\right\|^2 &\leq \frac{1}{N}\sum\nolimits_{n=1}^{N}\|x_n\|^2.
\end{align}

\section{Problem statement}\label{se:problem-statement}
This section formalizes the problem of interest.  Consider a set of $N$ agents (clients) that communicate with a server to learn parameters of a model that they want to fit into their collective data set. Each agent has its own local data which can be distributed either uniformly or non-uniformly. 
The learning objective is to obtain the learning function weights $\theta\in\real^d$ from 
\begin{equation}
\label{eq::sum}
    \min_{\theta\in\real^d}f(\theta) := \frac{1}{N}\sum_{n=1}^{N}f_n(\theta),\quad f_n(\theta)=\frac{1}{{L}_n}\sum_{i=1}^{L_n}f_{n,i}(\theta),
\end{equation}
where $f_n$ is possibly a convex or non-convex local learning loss function. At each agent $n\in\{1,\cdots,N\}$, $f_n$ depends on training data set $\{(q_{n,i},\hat{y}_{n,i})\}_{i=1}^{L_n}\subset \real^{1\times d}\times \real$ (supervised learning). Examples~include \cite{AG:17}
\begin{itemize}
    \item 
square loss $f_{n,i}(\theta)= \|(\hat{y}_{n,i} - q_{n,i}\theta)\|^2$,
\item log loss $f_{n,i}(\theta)=\text{log}(1+\textup{e}^{-\hat{y}_{n,i}q_{n,i} \theta})$.
\end{itemize}
\begin{assump}[Assumption on $\mathsf{L}$-smoothness of local cost functions]\label{assump::smoothness}
The local loss functions have $\mathsf{L}$-Lipschitz gradients, i.e., for any agent $n\in\{1,\cdots,N\}$ we have 
\begin{equation}
    \|\nabla f_n(\theta) - \nabla f_n(\bar{\theta})\| \leq \mathsf{L}\|\theta - \bar{\theta}\|
\end{equation}
    for any $\theta, \bar{\theta}\in\real^d$ and $\mathsf{L}\in\real_{>0}$.
\end{assump}
This assumption is technical and common in the literature.

Problem~\eqref{eq::sum} should be solved in the framework of FL in which agents maintain their local data and only interact with the server to update their local learning parameter vector based on a global feedback provided by the server. The server generates this global feedback from the local weights it receives from the agents.  
In our setting, at each round $k$ of the FL algorithm, each agent $n\in\{1,\cdots,N\}$ becomes active to perform local computations and connect to the server with a 
probability of $p_n^k$. We denote the active state by $\vect{1}_n^k\in\{0,1\}$; thus, $$p_n^k=\text{Prob}(\vect{1}_n^k=1).$$ The activation probability at each round can be different.



\section{Federated learning with variance reduction}
\label{section::fedavg}
To solve \eqref{eq::sum}, we design the FedAvg-SVRG Algorithm~\ref{alg:cap}, which has a two-layer structure. In this algorithm, each agent has its own probability to be active or passive in each round which is denoted by $p_n^k$ for agent $n$ at iteration $k$.

\begin{figure}[t]
    \centering
    \includegraphics[width=9cm, height=2cm]{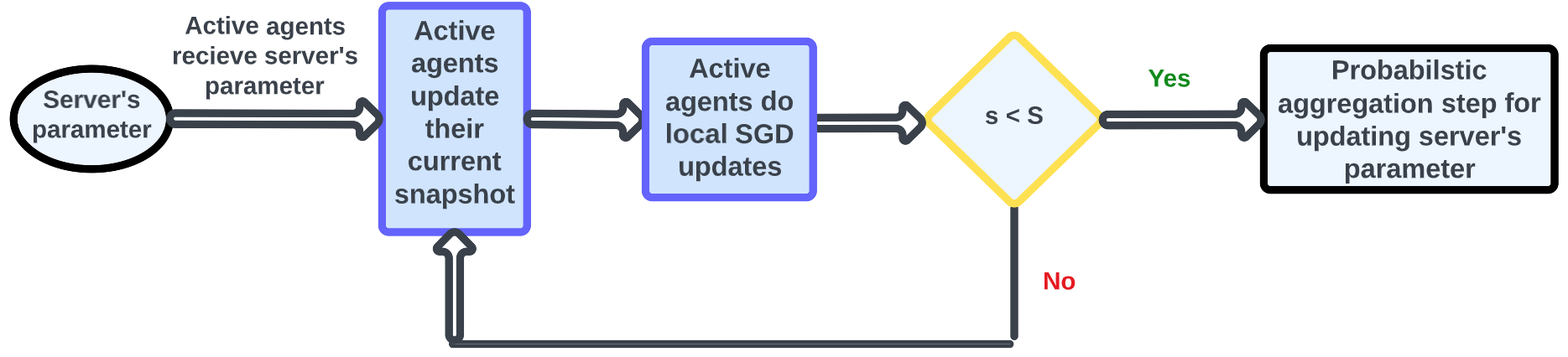}
    \caption{{\small SVRG update steps}.}
    \label{fig:svrg}
\end{figure}

Algorithm~\ref{alg:cap}  is initialized with $\theta^0$ by the server. We denote the number of the FL iterations by $K$. 
At each round $k\in\{1,\cdots,K\}$, the set of active agents is denoted by $\mathcal{A}^k$ (line 5), which is the set of agents for which $\vect{1}_n^k=1$. Then, each active agent receives a copy of the learning parameter $\theta^k$ from the server. Afterward, active agents perform their local updates according to steps 7 to 18. For resource management local update in FL algorithms, e.g.,~\cite{JP-SW-MJ-KSC:22}, follow an SGD update. However, the SGD update suffers from a high variance because of the randomized search of the algorithm, so instead of using the SGD update step, which results in a decaying step size and slow convergence, we use the SVRG update step which is stated in lines 7 to 18. In the SVRG update, we calculate the full batch gradient of the agents at some points, which are referred to as \emph{snapshots}. Then, between every two snapshots, each agent does its local update. A schematic of SVRG update steps is shown in Fig.~\ref{fig:svrg}.

We denote the number of snapshots in our SVRG method by $S$. We let $M$ be the number of local SVRG updates in between two snapshots for each active agent before aggregation. Line 10 of Algorithm~\ref{alg:cap} corresponds to computing the full batch gradient of each agent at the snapshot points, then in line 12, each agent does its local update with substituted gradient term denoted as \mbox{$v_{n,s,m}^{k} = \nabla f_\#(w_{n,s,m}^k)- \nabla f_\#(\tilde{w}) + \tilde{\mu}$}. Note the gradient substituted term in the SVRG update is an unbiased estimator. After completing the SVRG update, each agent updates its snapshot, which is mentioned in line 17\cite{RJ-TZ:13}\cite{RX-SK-UAK:20}. In the end, in line 20, the model parameter gets updated. It should be noted that the weight for updating the model parameter denoted by $\frac{\vect{1}_n^k}{p_n^k}$ makes the gradient to be unbiased when the model parameter wants to be updated because, by this fraction, agents with a low probability of being selected for each iteration still have an adequate impact on a model parameter when they play a part at each iteration. Unlike SGD, the stepsize $\delta$ for the SVRG update does not have to decay in line 14. Hence, it gives rise to a faster convergence as one can choose a large stepsize.

\begin{algorithm}[t]
\begin{algorithmic}[1]
\caption{FedAvg-SVRG with non-uniformly agent sampling}\label{alg:cap}
\label{alg:the_alg}
\State \textbf{Input:} $\delta$, $K$, $\theta^0$, $\{p_n^k\}$, $S$, $M$
\State \textbf{Output:} $\theta^k$
\For{$k \gets 0,..., K - 1$}
    \State Determine the active agents (sample $\vect{1}_n^k \sim p_n^k$)
    \State $\mathcal{A}^k\gets$\,set of active agents
    \For{$n\in\mathcal{A}^k$} 
        
         \State $ \tilde{w}_{n,0}^k = \theta^k$ 
         \For{$s \gets 0,...,S-1$}
         \State $\tilde{w}=\tilde{w}_{n,s}^k$
             \State $\tilde{\mu} = \frac{1}{L_n}\sum_{i=1}^{L_n} \nabla f_{n,i}(\tilde{w})$\vspace{0.03in}
             \State $w_{n,s,0}^k = \tilde{w}$
             \For{$m \gets 0,...,M-1$}
                \State Randomly pick $\#$  from $\in \{1,...,L_n\}$\vspace{0.03in}
                \State $w_{n,s,m+1}^k =w_{n,s,m}^k - \delta v_{n,s,m}^{k}$\vspace{0.04in}
                \State where $v_{n,s,m}^{k}= \nabla f_\#(w_{n,s,m}^k)- \nabla f_\#(\tilde{w})+\tilde{\mu}$
                \EndFor
            \State  set $\tilde{w}_{n,s+1}^k= w_{n,s,M}^k $ 
        \EndFor
    \EndFor
    \State $\theta^{k+1} = \theta^k + \frac{1}{N}\sum_{n=1}^{N} \frac{\vect{1}_n^k}{p_n^k}(w_{n,S-1,M}^k - \tilde{w}_{n,0}^k )$
    \EndFor
\end{algorithmic}
\end{algorithm}

\section{Convergence analysis }\label{sec::main}
In this section, we study the convergence bound for the proposed algorithm which is applicable for both convex and non-convex cost functions. 
\begin{assump}[Assumption on unbiased stochastic gradients]
\label{Assump:2}
\begin{equation}
    \mathbb{E} [\nabla f_\#(w)| w] = \nabla f_n(w)
\end{equation}
for any $w$ and $\# \in \{1,...,L_n\}$. \\As a result, our substituted gradient term denoted by \mbox{$v_{n,s,m}^{k} = \nabla f_\#(w_{n,s,m}^k)- \nabla f_\#(\tilde{w}) + \tilde{\mu}$} becomes unbiased where $\tilde{\mu}=\frac{1}{L_n}\sum_{i=1}^{L_n} \nabla f_{n,i}(\tilde{w})$.

\end{assump}

\begin{assump}[Bound on the substituted gradient term]
\begin{equation}\label{eq:bound}
    \EX[||v_{n,s,m}^k(w)||^2] \leq G^2, \forall \thinspace w,n,s,m \quad  
    \text{for some} \thinspace\thinspace G > 0 
\end{equation}
\end{assump}

\begin{assump}[Assumption on $\mu$-strongly convex local cost functions]\label{eq::sc}
    The local cost functions are strongly convex with parameter $\mu$, i.e.,
    \begin{equation}
        f_n(\theta_2) \geq f_n(\theta_1) + \nabla f_n(\theta_1)^T (\theta_2-\theta_1) +\frac{\mu}{2}||\theta_2 - \theta_1||^2
    \end{equation}
\end{assump}

Also, we should point out that $\vect{1}_n^k$ and $\vect{1}_{n^\prime}^k$ are independent for $n \neq n^\prime$, and the agent activation for each iteration is independent of random function selection. In other words, $\vect{1}_n^k$ and $\nabla f_\#(w)$ are completely independent.

 \begin{thm}[Convergence bound for the proposed algorithm for both convex and non-convex cost functions]\label{thm::main}
Let the assumptions \ref{assump::smoothness} and \ref{Assump:2} hold. Then Algorithm~\ref{alg:cap} results in 
\begin{align}\label{thm1}
    &\frac{1}{K}\sum_{k=0}^{K-1}\EX\bigg[ \bigg \|\nabla f(\theta^k) \bigg\|^2 \nonumber\bigg]\leq \frac{2}{\delta K S M}(f(\theta^0) -f^\star)\\
&+\frac{\delta^2 \mathsf{L}^2 (M-1)}{K S M N} \nonumber\\ 
&\Bigg[ \sum_{k=0}^{K-1}\sum_{n=1}^{N}\sum_{s=0}^{S-1}\sum_{m=0}^{M-1}\sum_{m^\prime=0}^{m-1}\EX\bigg[\bigg\|v_{n,s,m^\prime}^k \bigg\|^2\bigg] \Bigg] \nonumber \\
&+\frac{\delta \mathsf{L}}{KN}\sum_{k=0}^{K-1}\sum_{n=1}^{N}\frac{1}{p_n^k}\sum_{s=0}^{S-1}\sum_{m=0}^{M-1}\EX\bigg[\bigg\|v_{n,s,m}^{k}\bigg\|^2\bigg].
\end{align}
\\
Furthermore, if assumption (\ref{eq:bound}) holds we can write the following bound on the right hand side:
\begin{align}\label{eq::bound}
    &\frac{1}{K}\sum_{k=0}^{K-1}\EX\bigg[ \bigg \|\nabla f(\theta^k) \bigg\|^2 \nonumber\bigg]\leq \frac{2}{\delta K S M}(f(\theta^0) -f^\star)\\
&+ {\delta^2 \mathsf{L}^2 (M-1)^2}G^2 \nonumber\\
&+\frac{\delta \mathsf{L} SMG^2}{KN}\sum_{k=0}^{K-1}\sum_{n=1}^{N}\frac{1}{p_n^k},
\end{align}
where $f^\star$ is the optimal solution to \eqref{eq::sum}.
\end{thm}

Proof of Theorem~\ref{thm::main} is given in the appendix.

\textbf{\emph{Remark}}: According to (\ref{eq::bound}) a rate of convergence of the algorithm is determined by $\min[\frac{1}{\delta K},\delta^2,\frac{\delta}{K}]$. In order to select the convergence rate of our algorithm we can derive it by choosing $\delta = \frac{1}{K^\epsilon}$. Then, the rate of convergence is chosen from $\min[\frac{1}{ K^{1-\epsilon}},\frac{1}{K^{2\epsilon}},\frac{1}{K^{1+\epsilon}}]$. By selecting $\epsilon = \frac{1}{3}=\max\{1-\eps,2\eps\}$ the best convergence rate can be obtained, which is of order $O(\frac{1}{\sqrt[3]{K^2}})$. Thus, using the decaying step-size ($\delta = \frac{1}{\sqrt[3]{K}}$) allows us obtain $O(\frac{1}{\sqrt[3]{K^2}})$ convergence to the optimal point for both convex and non-convex cost functions.

\begin{lem}
    If assumption (\ref{eq::sc}) holds, then algorithm \ref{alg:the_alg} will convergence to the optimal point with rate no less than $O(\frac{1}{K})$.\\
    \begin{proof}
        Without loss of generality, let say $S = 1$, \emph{i.e,} no intermediate full gradient calculation for each agent. Then, we have the following upper bound for the substituted gradient term:\\
        
        \begin{equation*}
            v_{n,m^\prime}^{k}= \nabla f_\#(w_{n,m^\prime}^k)- \nabla f_\#(\tilde{w})+\tilde{\mu}
        \end{equation*}
        Taking expectation with respect to $\#$, using the facts that \mbox{$||x+y||_2^2 \leq 2||x||_2^2 + 2||y||_2^2$}, $\tilde{\mu} = \nabla f_n(\tilde{w})$ and $\EX||x-\EX x||_2^2 \leq  \EX||x||_2^2$, we get the following inequalities:
        \begin{align*}
            \EX||v_{n,m^\prime}^{k}||_2^2 &\leq 2\EX||\nabla f_\#(w_{n,m^\prime}^k) - \nabla f_\#(w^\star_n) ||_2^2\\
            &+ 2\EX||[\nabla f_\#(\tilde{w}) - \nabla f_\#(w^\star_n)] - \nabla f_n(\tilde{w})||_2^2\\
            &=2\EX||\nabla f_\#(w_{n,m^\prime}^k) - \nabla f_\#(w^\star_n) ||_2^2\\
            &+ 2\EX||[\nabla f_\#(\tilde{w}) - \nabla f_\#(w^\star_n)]\\
            &- \EX[\nabla f_\#(\tilde{w})-\nabla f_\#(w^\star_n)] ||_2^2\\
            &\leq 2\EX||\nabla f_\#(w_{n,m^\prime}^k) - \nabla f_\#(w^\star_n) ||_2^2\\
            &+ 2\EX||[\nabla f_\#(\tilde{w}) - \nabla f_\#(w^\star_n)]||_2^2\\
            &\leq 4\mathsf{L} [f_n(w_{n,m^\prime}^k) -f_n(w^\star_n)+f_n(\tilde{w})-f_n(w^\star_n)].
        \end{align*}
        Where in the last inequality we use theorem 1 in \cite{RJ-TZ:13}.\\
        By summing both sides for $m^\prime =0 : m-1$ and selecting $w_{n,m}^k = w_{n,m^\prime}^k$ for randomly chosen $m^\prime \in \{0,...,m-2\}$ which is a valid update scheme in SVRG, we get the following:
        \begin{align}\label{expec:bound}
            \sum_{m^\prime = 0}^{m-1} \EX||v_{n,m^\prime}^{k}||_2^2 &\leq \nonumber \sum_{m^\prime = 0}^{m-1} 4\mathsf{L} [f_n(w_{n,m^\prime}^k) \nonumber\\
            &-f_n(w^\star_n)+f_n(\tilde{w})-f_n(w^\star_n)]\nonumber\\
            & =4\mathsf{L} m \EX[f_n(w_{n,m}^k) \nonumber\\
            &-f_n(w^\star_n)+f_n(\tilde{w})-f_n(w^\star_n)]\nonumber\\
            &= 4\mathsf{L} m \bigg[\EX[f_n(w_{n,m}^k)-f_n(w^\star_n)]\nonumber\\
            &+\EX[f_n(\tilde{w})-f_n(w^\star_n)]\bigg],
        \end{align}
        where we use the fact that $\EX[x+y] = \EX[x]+\EX[y]$ in the last equality.
            Next, invoking~\cite[Theorem 1]{RJ-TZ:13}, we can write
            \begin{equation}\label{eq:bound:svrg}
                \EX[f_n(w_{n,m}^k) - f_n(w^\star_n)] \leq \alpha^{k}\EX[f_n(\tilde{w}_{n}^k) - f_n(w^\star_n)],
            \end{equation}
            \begin{equation}
                 \EX[f_n(w_{n,m}^k) - f_n(w^\star_n)] \leq \alpha^{k}D,
            \end{equation}
            where $\alpha = h(\mu,\mathsf{L},\delta,M) < 1$ and $D$ is an upper bound of $\EX[f_n(\tilde{w}_{n}^k) - f_n(w^\star_n)]$, then we have geometric convergence in expectation for SVRG.


        Using (\ref{eq:bound:svrg}) we can upper bound the last two expectations in (\ref{expec:bound}) and get the following result:

        \begin{align}\label{eq::final bound}
            \sum_{m^\prime = 0}^{m-1} \EX||v_{n,m^\prime}^{k}||_2^2 &= 4\mathsf{L} m \bigg[\EX[f(\tilde{w}_{n}^k)-f(w^\star_n)]\nonumber\\
            &+\EX[f(\tilde{w})-f(w^\star_n)]\bigg]\nonumber\\
            &\leq 8\mathsf{L} m\thinspace \alpha^{k}\EX[f_n(\tilde{w}_{n}^k) - f_n(w^\star_n)]\nonumber\\
            &\leq 8\mathsf{L} m\thinspace \alpha^{k} D,
        \end{align}
        note that because $\alpha<1$, we have:
        \begin{equation}\label{seris}
            \sum_{k=0}^{K-1}\alpha^k = \frac{1-\alpha^K}{1-\alpha}.
        \end{equation} 
        by using (\ref{eq::final bound}) and (\ref{seris}) into (\ref{thm1}) we can establish $O(\frac{1}{K})$ convergence rate for the proposed algorithm.
        \begin{align}
    &\frac{1}{K}\sum_{k=0}^{K-1}\EX\bigg[ \bigg \|\nabla f(\theta^k) \bigg\|^2 \nonumber\bigg]\leq \frac{2}{\delta K M}(f(\theta^0) -f^\star)\\
    &+\frac{8D\delta^2 \mathsf{L}^3 (M-1)^2 (1-\alpha^K)}{K (1-\alpha)}  \nonumber \\
    &+\frac{8D\delta \mathsf{L^2}(M-1)}{KN}\sum_{k=0}^{K-1}\alpha^{k}\sum_{n=1}^{N}\frac{1}{p_n^k}.
    \end{align}
    \end{proof}
\end{lem}
By incorporating a SVRG approach in our FL algorithm, Theorem~\ref{thm::main} guarantees that we can a fixed size step-size $\delta$ and achieve a convergence rate of $O(\frac{1}{K})$. The improvement is due to the fact that the SVRG update step does not need to have a decaying stepsize throughout the learning process. Thus, using a constant and larger stepsize leads the algorithm to faster convergence to the stationary point.
This is an improvement over the existing algorithm~\cite{JP-SW-MJ-KSC:22} in which they guarantee $O(\frac{1}{\sqrt{K}})$ as the convergence rate of the algorithm by using the SGD method for their local update step.

\section{Numerical Simulations}\label{sec::num}
In this section, we analyze and demonstrate the performance of the Algorithm~\ref{alg:the_alg} by solving a  regression problem (quadratic loss function). In this study, we compare the performance of our algorithm to that of the FedAvg in~\cite{JP-SW-MJ-KSC:22}. We used a real medical insurance data set of $900$ persons in the form of $(y,v)\in\real\times\real^{1\times 5}$. Then, to observe the effect of stochastic optimization, we distribute the data among $18$ agents. Thus, each agent owns $50$ quadratic costs. In other words, we seek to solve the following convex optimization problem:
\begin{align}
        &\min_{\theta\in\real^{10}} f(\theta)=\frac{1}{N}\sum_{n=1}^{N}f_n(\theta),
        \\ & ~f_n(\theta)=\frac{1}{{L}_n}\sum_{i=1}^{L_n}f_{n,i}(\theta),~~f_{n,i}(\theta)= \|q_{n,i}\theta-\hat{y}_{n,i} \|^2,\nonumber
\end{align}

\begin{figure}[t]
    \centering
    \includegraphics[scale=0.2]{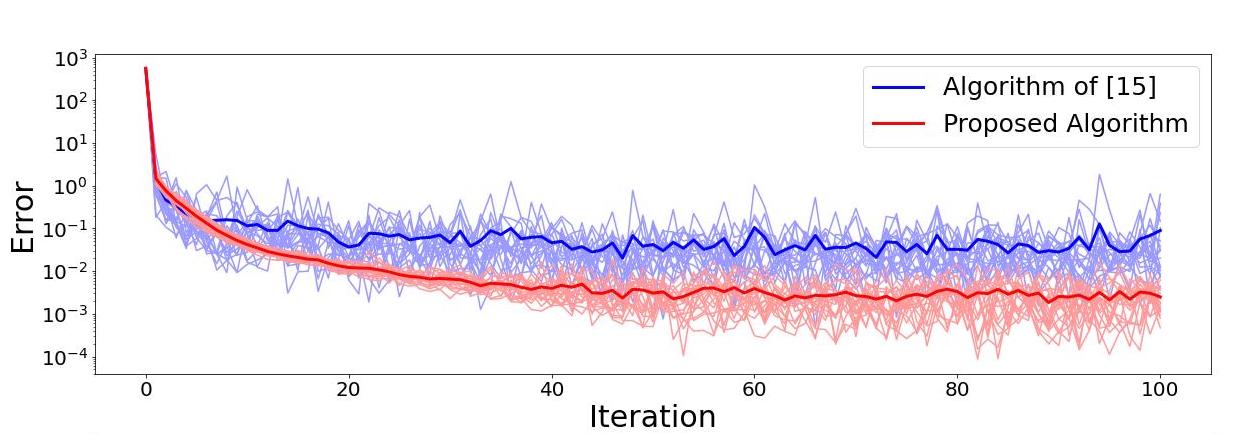}
    \caption{{\small Error of cost function for the first case (\mbox{$S=5$ and $M=2$}) over 20 Monte Carlo iterations (thicker line correspond to the mean of Monte Carlo iterations, vertical axis is limited for the purpose of better visualization).}}
    \label{fig::cost_1}
\end{figure}

\begin{figure}[t]
    \centering
    \includegraphics[scale=0.2]{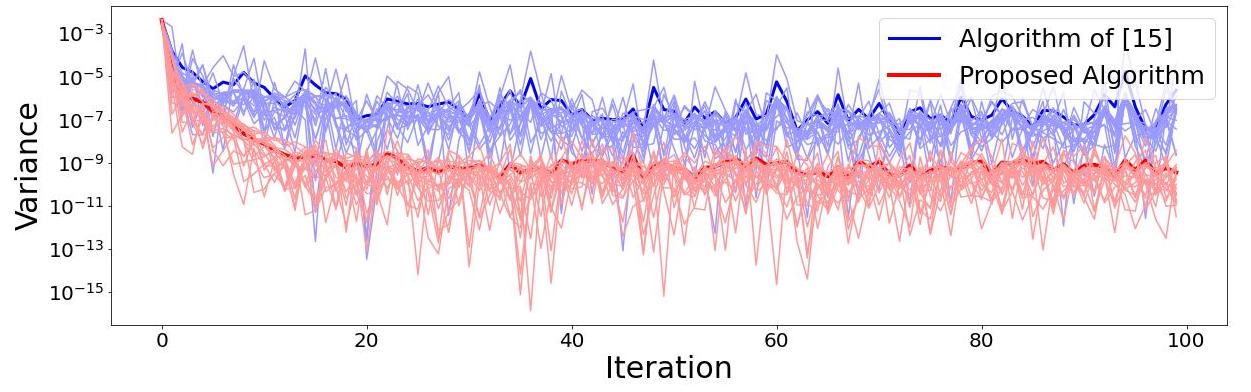}
    \caption{{\small Variance of cost function for the first case (\mbox{$S=5$ and $M=2$}) over 20 Monte Carlo iterations (thicker line correspond to the mean of Monte Carlo iterations, vertical axis is limited for the purpose of better visualization)}.}
    \label{fig::variance_1}
\end{figure}
where in our problem, $N$, and $L_n$ are 18 and 50, respectively. Here, $\hat{y}_{n,i} \in \real$, and $\theta$ is the learning parameter (weight) which is a column vector with 5 elements.
We conduct $20$ Monte Carlo simulation in all of which we initialize our algorithm at  $\theta^0= [0.5,...,0.5]^\top$, and we use the fixed  step-size  $\delta = \frac{1}{\sqrt{100}}$ in all rounds.
We also simulate the FedAvg algorithm of~\cite{JP-SW-MJ-KSC:22} with the same initialization but using the decaying  stepsize of $\frac{1}{\sqrt{K}}$ as mentioned in~\cite{JP-SW-MJ-KSC:22}. For our algorithm we consider two cases: (1) $(S=5,M=2)$ and (2) $(S=10,M=5)$.

\begin{figure}[t]
    \centering
    \includegraphics[scale=0.27]{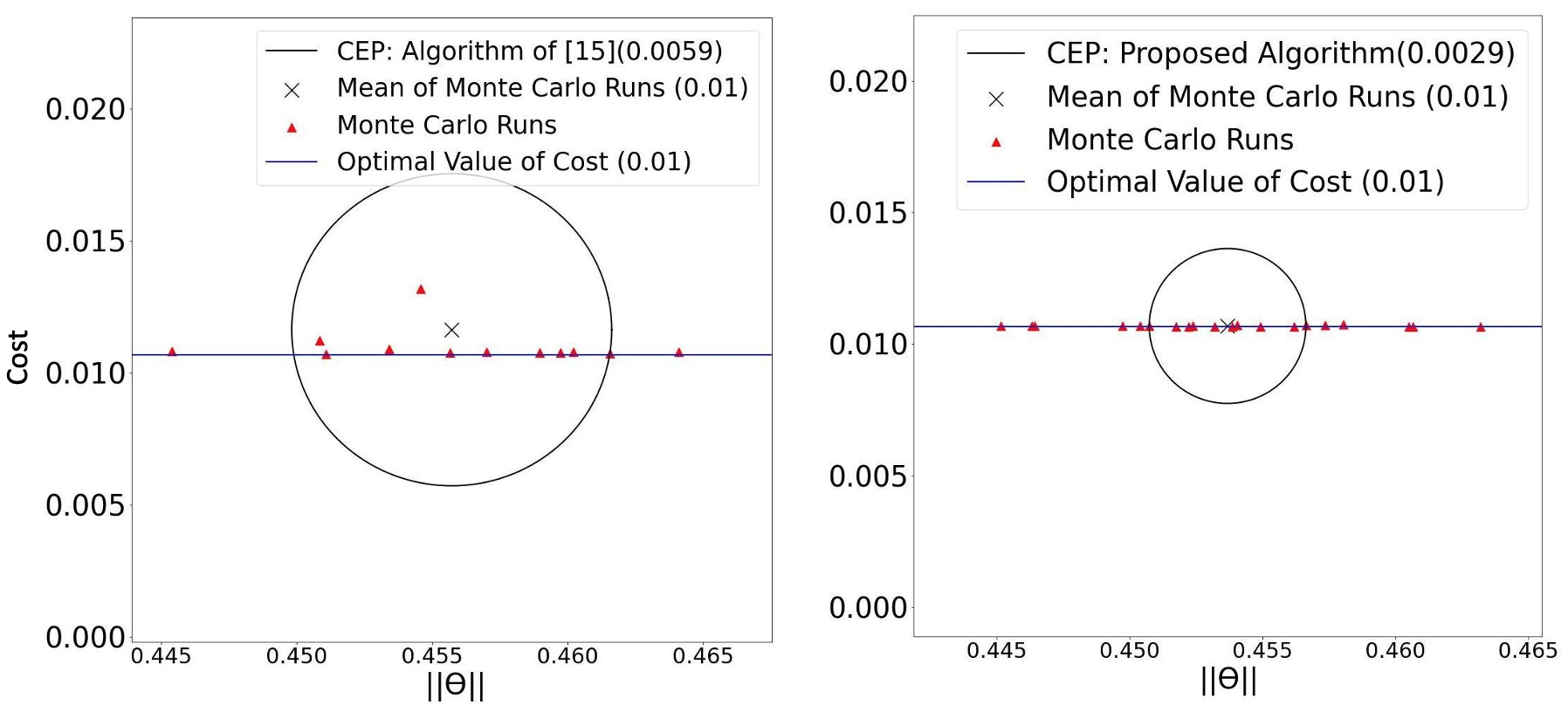}
    \caption{{\small CEP graph for the second case (\mbox{$S=5$ and $M=2$}) over 20 Monte Carlo iterations. The results for our algorithm is in the right while the result for algorithm of [15] is shown on the left}.}  
    \label{fig::CEP_1}
\end{figure}

\begin{figure}[t]
    \centering
    \includegraphics[scale=0.2]{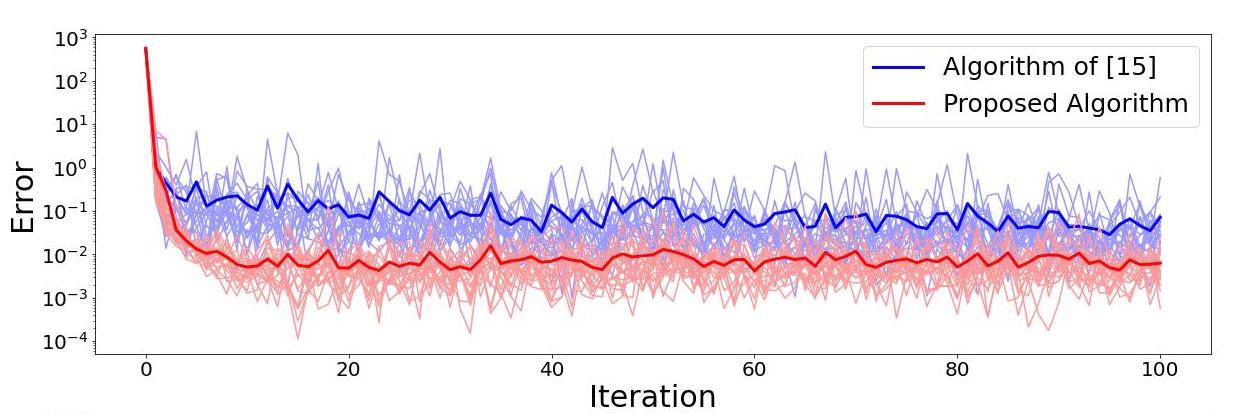}
    \caption{{\small Error of cost function for the second case (\mbox{$S=10$ and $M=5$}) over 20 Monte Carlo iterations (thicker line correspond to the mean of Monte Carlo iterations, $y$ axis is limited for the purpose of better visualization)}.}
    \label{fig::cost_2}
\end{figure}

The simulation results for the first case are shown in Fig.~\ref{fig::cost_1}--Fig.\ref{fig::CEP_1}, while the results for the second case are shown in Fig.~\ref{fig::cost_2}--Fig.\ref{fig::CEP_2}. Figures \ref{fig::cost_1} and \ref{fig::cost_2} show that in both cases our algorithm has a faster convergence to the optimal cost (the value is $0.01$). 

Figures \ref{fig::variance_1} and \ref{fig::variance_2} show the variance caused by the two algorithms. The variance of our algorithm is significantly lower than that of the algorithm of~\cite{JP-SW-MJ-KSC:22}.  In order to show the variance of our algorithm, we put a logarithmic axis on y axis. Also, the variance of our algorithm decreases as the number of iterations increases as opposed to the algorithm of~\cite{JP-SW-MJ-KSC:22}, which suffers from a high variance.

Figure \ref{fig::CEP_1} and \ref{fig::CEP_2} show the circular error probable (CEP) to observe the variance in the last iteration \mbox{($K=100$)} for our algorithm and the FedAvg algorithm in \cite{JP-SW-MJ-KSC:22}. CEP is a measure used in navigation filters. It is defined as the radius of a circle, centered on the mean, whose perimeter is expected to include the landing points of 50\% of the rounds; said otherwise, it is the median error radius~\cite{JCS-JLM:92}. 
Here, then, CEP demonstrates how far the means of the Monte Carlo runs are from 50\% of the Monte Carlo iterations for both algorithms. As a result, less radius means less variance from the mean of the Monte Carlo runs. This plot shows not only our algorithm reaches a closer neighborhood to the optimal cost, but also, it has less CEP radius in comparison to that of the algorithm of \cite{JP-SW-MJ-KSC:22}; this is another indication that our algorithm has less variance compared to the FedAvg algorithm in \cite{JP-SW-MJ-KSC:22}. For our algorithm, the CEP radius in the first and the second cases are respectively $0.0029$ and $0.0077$, while these values of the algorithm of \cite{JP-SW-MJ-KSC:22} are respectively $0.0059$ and $0.0201$.

To complete our simulation study, we also compare the convergence performance of our algorithm to that of the FedAvg of \cite{HBM-EM-DR-SH-BAA:17}, which uses a 
uniform agent selection. Figure~\ref{fig::all_models} demonstrates the results when we use the batch size of $5$ of the FedAvg of \cite{HBM-EM-DR-SH-BAA:17} and use the parameters corresponding to the first case for our algorithm. As we can see, our algorithm outperforms the FedAvg of~\cite{HBM-EM-DR-SH-BAA:17} both in mean and variance.

\begin{figure}[t]
    \centering
    \includegraphics[scale=0.2]{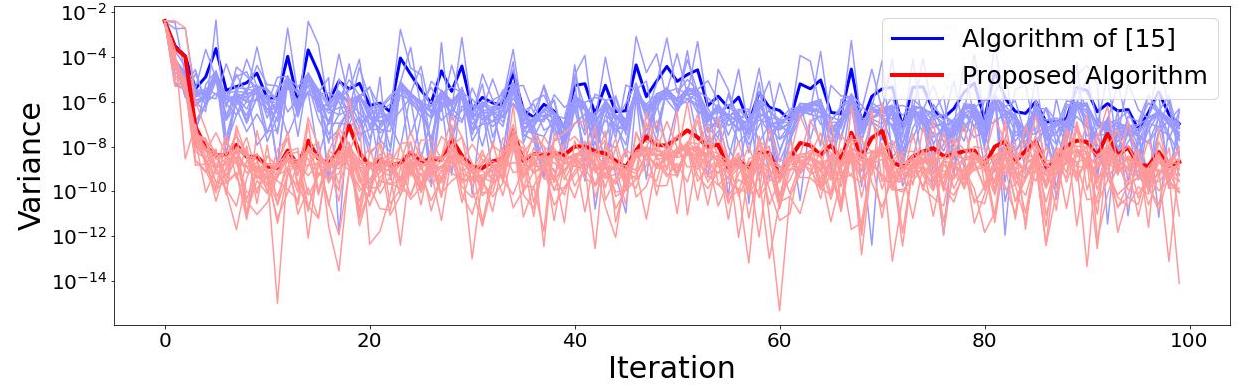}
    \caption{{\small Variance of cost function for the second case (\mbox{$S=10$ and $M=5$}) over 20 Monte Carlo iterations (thicker line correspond to the mean of Monte Carlo iterations, $y$ axis is limited for the purpose of better visualization).}}
    \label{fig::variance_2}
\end{figure}



\begin{figure}[t!]
    \centering
    \includegraphics[scale=0.35]{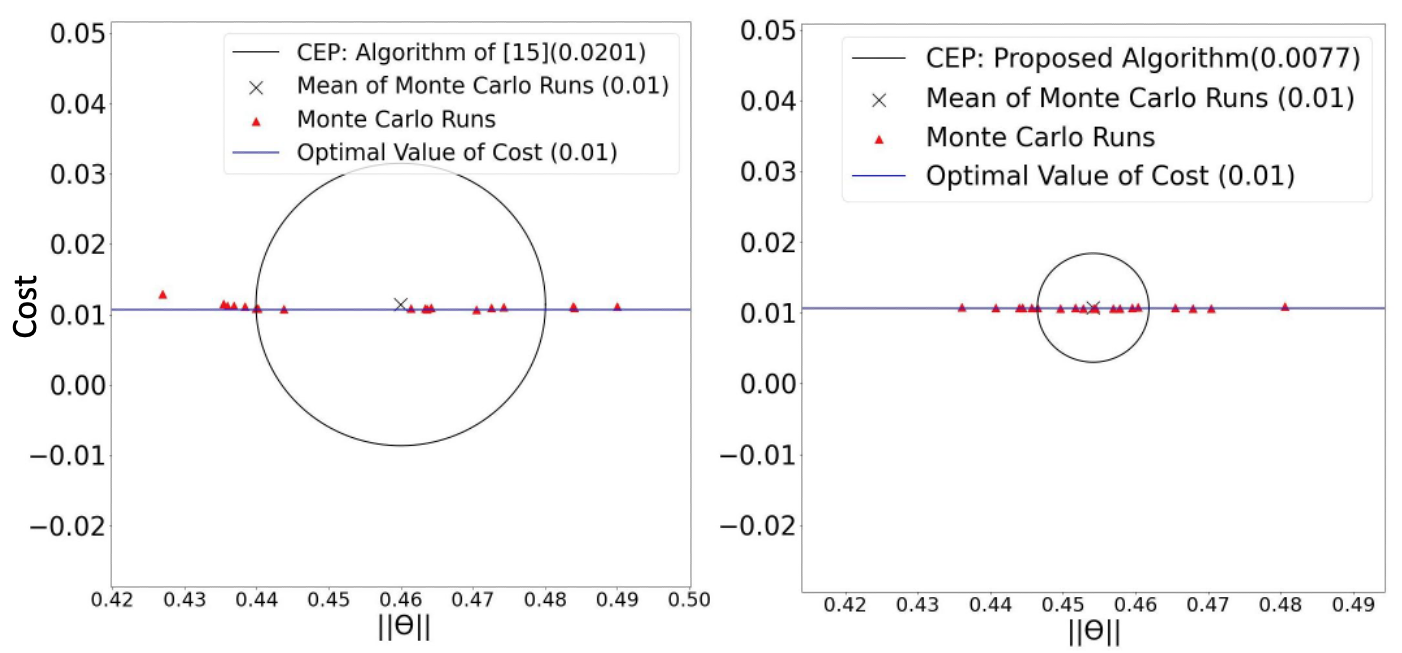}
\caption{{\small CEP graph for the second case (\mbox{$S=10$ and $M=5$}) over $20$ Monte Carlo iterations. The results for our algorithm is in the right while the result for algorithm of [15] is shown on the left.}}  
\label{fig::CEP_2}
\end{figure}

\begin{figure}[t]
    \centering  
    \includegraphics[scale=0.2]{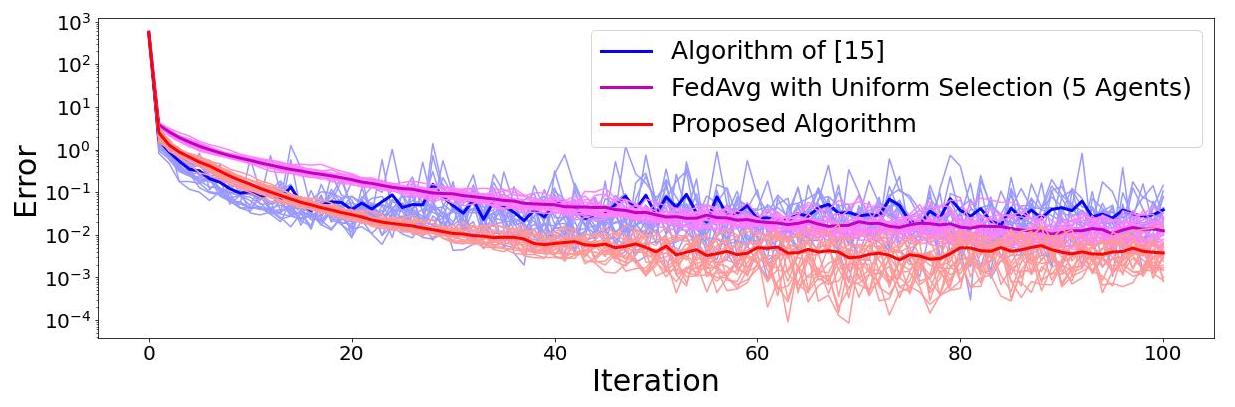}
    \caption{{\small Comparing the results of the cost function for three algorithms over 20 Monte Carlo iterations (thicker line correspond to the mean of Monte Carlo iterations, $y$ axis is limited for the purpose of better visualization).}}
    \label{fig::all_models}
\end{figure}

\section{Conclusions}\label{sec::conclu}
We have proposed an algorithm in the FL framework in the setting where each agent can have a non-uniform probability of becoming active (getting selected) in each FL round. The algorithm possesses a doubly-layered structure as the original FL algorithms. The first layer corresponds to distributing the server parameter to the agents. At the second layer, each agent updates its copy of the server parameter through an SVRG update. Then after each agent sends back its update, the server parameter gets updated. By leveraging the SVRG technique from stochastic optimization, we constructed a local updating rule that allowed the agents to use fixed stepsize.  We characterized an upper bound for the gradient of the expected value of the cost function, which showed our algorithm converges to the optimal solution with the rate of no less than $O(\frac{1}{K})$. This showed an improvement over the existing results that only have a convergence rate of  $O(\frac{1}{\sqrt{K}})$.  We demonstrated the performance of our algorithm through several numerical examples. We used various statistical measures to show our algorithm's faster convergence and low variance compared to some state-of-the-art existing FL algorithms.  Future work will investigate the extension of the result for the non-uniformly selection of snapshots inside the SVRG update for computing the full batch gradient of the agents.


\bibliographystyle{ieeetr}%
\bibliography{bib/alias,bib/Reference} 

\vspace{-0.1in}
\appendix
\renewcommand{\theequation}{A.\arabic{equation}}
\renewcommand{\thethm}{A.\arabic{thm}}
\renewcommand{\thelem}{A.\arabic{lem}}
\renewcommand{\thedefn}{A.\arabic{defn}}
This appendix gives the proof of Theorem~\ref{thm::main}. However, before giving that proof we state some auxiliary lemmas that we will invoke in our proof. 

\begin{lem} \label{lem::1}
Consider Algorithm~\ref{alg:cap}. We can establish that 
\begin{align}\label{eq::upperbound}
      \frac{\delta^2\mathsf{L}}{2N^2}\EX\bigg[\bigg\|\sum_{n=1}^{N}\frac{\vect{1}_n^k}{p_n^k}\sum_{s=0}^{S-1}\sum_{m=0}^{M-1}v_{n,s,m}^{k}\bigg\|^2\bigg| \theta^k\bigg] \leq NSM\sum_{n=1}^{N}\frac{1}{p_n^k}\sum_{s=0}^{S-1}\sum_{m=0}^{M-1}\EX\bigg[\bigg\|v_{n,s,m}^{k}\bigg\|^2\bigg| \theta^k\bigg]  
\end{align}
\end{lem}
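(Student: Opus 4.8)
The plan is to bound the conditional expectation on the left by applying Jensen's inequality twice and then using the elementary second-moment properties of the Bernoulli activation indicators. It is convenient to abbreviate $V_n := \sum_{s=0}^{S-1}\sum_{m=0}^{M-1} v_{n,s,m}^{k}$, so that the quantity inside the norm is $\sum_{n=1}^{N}\frac{\vect{1}_n^k}{p_n^k}V_n$; the coefficient $\frac{\delta^2\mathsf{L}}{2N^2}$ is deterministic and will simply be carried through the estimate.

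First I would view the outer sum as a sum of the $N$ vectors $\frac{\vect{1}_n^k}{p_n^k}V_n$ and apply Jensen's inequality from the Preliminaries in the form $\|\sum_{n=1}^{N}x_n\|^2\le N\sum_{n=1}^{N}\|x_n\|^2$. This yields $\|\sum_{n=1}^{N}\frac{\vect{1}_n^k}{p_n^k}V_n\|^2 \le N\sum_{n=1}^{N}\frac{(\vect{1}_n^k)^2}{(p_n^k)^2}\|V_n\|^2$. The key gain from applying Jensen at this stage, rather than expanding the square directly, is that it bypasses all cross terms $\langle V_n,V_{n'}\rangle$ and already exposes the factor $N$ appearing on the right-hand side of the lemma. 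Taking the conditional expectation given $\theta^k$ and invoking the independence stated just before the theorem --- the indicators $\vect{1}_n^k$ are mutually independent and independent of the random index $\#$, hence of $V_n$ --- the expectation of each summand factors. Because $\vect{1}_n^k\in\{0,1\}$ we have $(\vect{1}_n^k)^2=\vect{1}_n^k$ with $\EX[\vect{1}_n^k]=p_n^k$, so the coefficient $\EX[(\vect{1}_n^k)^2]/(p_n^k)^2$ collapses to exactly $1/p_n^k$, leaving $N\sum_{n=1}^{N}\frac{1}{p_n^k}\EX[\|V_n\|^2\mid\theta^k]$.

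Next I would apply Jensen's inequality a second time to each $\|V_n\|^2=\|\sum_{s=0}^{S-1}\sum_{m=0}^{M-1}v_{n,s,m}^{k}\|^2$, now regarded as the squared norm of a sum of $SM$ vectors, to get $\|V_n\|^2\le SM\sum_{s=0}^{S-1}\sum_{m=0}^{M-1}\|v_{n,s,m}^{k}\|^2$. Substituting this bound, reattaching the deterministic prefactor $\frac{\delta^2\mathsf{L}}{2N^2}$, and collecting the constants produces the claimed inequality, with the two combinatorial factors $N$ and $SM$ coming from the two Jensen steps and the weighting $1/p_n^k$ coming from the indicator moments.

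The step I expect to require the most care is the handling of the activation indicators: one must use both that they are $\{0,1\}$-valued (so that the second moment equals the first) and that they are independent of the stochastic gradient selection (so that the expectation of the product factors into the product of expectations). This is precisely the place where the importance-sampling weight $1/p_n^k$ is generated, and it is also what justifies discarding the cross terms cleanly; the remaining bookkeeping of the constants is routine.
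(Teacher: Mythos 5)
Your proof is correct and takes essentially the same route as the paper's: Jensen's inequality over the $N$ agents, factoring out the activation indicator via $(\vect{1}_n^k)^2=\vect{1}_n^k$, independence, and $\EX[\vect{1}_n^k]=p_n^k$ to produce the weight $1/p_n^k$, then Jensen again over the inner sums (the paper applies it twice, over $s$ and then over $m$, while you apply it once over all $SM$ terms---an immaterial difference). One remark: carrying the prefactor through faithfully, as you do, actually yields the right-hand side $\frac{\delta^2\mathsf{L}SM}{2N}\sum_{n=1}^{N}\frac{1}{p_n^k}\sum_{s=0}^{S-1}\sum_{m=0}^{M-1}\EX[\|v_{n,s,m}^{k}\|^2\mid\theta^k]$, which is exactly the bound used later in the theorem's proof; the mismatch with the literal $NSM\sum(\cdot)$ written in the lemma statement is an inconsistency in the paper itself (whose own proof also drops the prefactor), not a gap in your argument.
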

\begin{proof}
By using Jensen's inequality we get the following inequalities:
\begin{align*}
&\EX\bigg[\bigg\|\sum_{n=1}^{N}\frac{\vect{1}_n^k}{p_n^k}\sum_{s=0}^{S-1}\sum_{m=0}^{M-1}v_{n,s,m}^{k}\bigg\|^2\bigg| \theta^k\bigg]\\
&\leq N\sum_{n=1}^{N}\EX\bigg[\bigg\|\frac{\vect{1}_n^k}{p_n^k}\sum_{s=0}^{S-1}\sum_{m=0}^{M-1}v_{n,s,m}^{k}\bigg\|^2\bigg| \theta^k\bigg]\\
&=N\sum_{n=1}^{N}\frac{\EX[\vect{1}_n^k|\theta^k]}{(p_n^k)^2}\EX\bigg[\bigg\|\sum_{s=0}^{S-1}\sum_{m=0}^{M-1}v_{n,s,m}^{k}\bigg\|^2\bigg| \theta^k\bigg]\\
&\leq NS\sum_{n=1}^{N}\frac{\EX[\vect{1}_n^k|\theta^k]}{(p_n^k)^2}\sum_{s=0}^{S-1}\EX\bigg[\bigg\|\sum_{m=0}^{M-1}v_{n,s,m}^{k}\bigg\|^2\bigg| \theta^k\bigg]\\
   &\leq NSM\sum_{n=1}^{N}\frac{\EX[\vect{1}_n^k|\theta^k]}{(p_n^k)^2}\sum_{s=0}^{S-1}\sum_{m=0}^{M-1}\EX\bigg[\bigg\|v_{n,s,m}^{k}\bigg\|^2\bigg| \theta^k\bigg]\\
&\leq NSM\sum_{n=1}^{N}\frac{1}{p_n^k}\sum_{s=0}^{S-1}\sum_{m=0}^{M-1}\EX\bigg[\bigg\|v_{n,s,m}^{k}\bigg\|^2\bigg| \theta^k\bigg] 
\end{align*}
Which concludes the proof.
\end{proof}

\begin{lem}\label{lemm 4.2}
Consider Algorithm~\ref{alg:cap}. We can establish that 
\begin{multline}\label{eq::12}\\
-\delta\EX\bigg[\bigg\langle \nabla f(\theta^k), \frac{1}{N}\sum_{n=1}^{N}\nabla f_n(w_{n,s,m}^k)\bigg\rangle\bigg]
\leq\frac{\delta^3 \mathsf{L}^2(M-1)}{2N}\sum_{n=1}^{N}\sum_{m^\prime=0}^{m-1}\EX\bigg[\bigg\| v_{n,s^,m^\prime}^k \bigg\|^2\bigg] -\frac{\delta}{2}\EX\bigg[\bigg\|\nabla f(\theta^k)\bigg\|^2\bigg]
\end{multline}
\end{lem}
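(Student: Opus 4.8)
The plan is to bound the inner product by completing the square and then to control the resulting gradient-mismatch term through $\mathsf{L}$-smoothness and the telescoped SVRG recursion. First I would invoke the elementary identity $\tfrac12\|a-b\|^2 = \tfrac12\|a\|^2 - \langle a,b\rangle + \tfrac12\|b\|^2$, which after discarding the nonnegative $\tfrac12\|b\|^2$ gives $-\langle a,b\rangle \le \tfrac12\|a-b\|^2 - \tfrac12\|a\|^2$. Applying this with $a = \nabla f(\theta^k)$ and $b = \tfrac1N\sum_{n=1}^N \nabla f_n(w_{n,s,m}^k)$, multiplying by $\delta$ and taking expectations yields
\[
-\delta\EX\Big[\big\langle \nabla f(\theta^k),\, \tfrac1N\sum_{n=1}^N \nabla f_n(w_{n,s,m}^k)\big\rangle\Big] \le \tfrac{\delta}{2}\EX\big[\|a-b\|^2\big] - \tfrac{\delta}{2}\EX\big[\|\nabla f(\theta^k)\|^2\big].
\]
The second term already matches the claimed $-\tfrac{\delta}{2}\EX[\|\nabla f(\theta^k)\|^2]$, so the whole task reduces to showing that $\tfrac{\delta}{2}\EX[\|a-b\|^2]$ is bounded by the first term on the right-hand side of the lemma.

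Next, using $\nabla f = \tfrac1N\sum_n \nabla f_n$, I would write $a-b = \tfrac1N\sum_n\big(\nabla f_n(\theta^k) - \nabla f_n(w_{n,s,m}^k)\big)$, apply the Jensen inequality stated in Section~\ref{sec::prelim} to pull the average out of the squared norm, and then invoke Assumption~\ref{assump::smoothness} to replace each gradient difference by $\mathsf{L}^2\|\theta^k - w_{n,s,m}^k\|^2$. This produces $\EX[\|a-b\|^2] \le \tfrac{\mathsf{L}^2}{N}\sum_n \EX[\|\theta^k - w_{n,s,m}^k\|^2]$, reducing everything to a bound on the iterate displacement from the current server parameter.

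The crux is then to expand $\theta^k - w_{n,s,m}^k$ using the algorithm's inner recursion. Since $w_{n,s,m+1}^k = w_{n,s,m}^k - \delta v_{n,s,m}^k$ together with the snapshot resets $w_{n,s,0}^k = \tilde{w}_{n,s}^k = w_{n,s-1,M}^k$ and $\tilde{w}_{n,0}^k = \theta^k$, the displacement telescopes into a sum of increments $-\delta v_{n,s',m'}^k$ over all inner steps preceding $(s,m)$. A second application of Jensen/Cauchy--Schwarz bounds the squared norm of this sum by the number of increments times the sum of the individual $\|\delta v_{n,s',m'}^k\|^2$; collecting constants and summation ranges then yields the factor $\delta^2\mathsf{L}^2(S-1)(M-1)$ together with the double sum $\sum_{s'=0}^{s}\sum_{m'=0}^{m-1}$.

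I expect the main obstacle to be precisely this index bookkeeping: correctly telescoping the nested $(s,m)$ iteration through the snapshot resets, counting the number of accumulated increment terms, and bounding that count by $(S-1)(M-1)$ so that the summation ranges align with the stated right-hand side. The completing-the-square step and the smoothness/Jensen estimates are routine; the delicacy lies entirely in tracking how the inner SVRG updates build up between successive snapshots and in matching the resulting ranges and constants to the claimed expression.
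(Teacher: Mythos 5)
Your proposal is correct and takes essentially the same route as the paper's proof: your completing-the-square identity is equivalent to the paper's add-and-subtract of $\nabla f(\theta^k)$ followed by a Young-type bound on the cross term (which the paper calls Cauchy--Schwarz), both yielding $-\langle a,b\rangle \le \tfrac{1}{2}\|a-b\|^2 - \tfrac{1}{2}\|a\|^2$, after which the two arguments coincide (Jensen, $\mathsf{L}$-smoothness, telescoping the inner SVRG recursion, and two further Jensen applications to extract the $(S-1)(M-1)$ factor). The index bookkeeping you flag as the delicate point is handled in the paper exactly as you describe, by writing $\theta^k - w_{n,s,m}^k$ as the accumulated sum of $\delta v_{n,s^\prime,m^\prime}^k$ over the double range $\sum_{s^\prime=0}^{s}\sum_{m^\prime=0}^{m-1}$ and peeling off the two Jensen factors one summation at a time.
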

\begin{proof} 
We start by noting that 
\begin{align*}
&-\delta\EX\bigg[\bigg\langle \nabla f(\theta^k), \frac{1}{N}\sum_{n=1}^{N}\nabla f_n(w_{n,s,m}^k)\bigg\rangle\bigg]=-\delta\EX\bigg[\bigg\langle \nabla f(\theta^k), \frac{1}{N}\sum_{n=1}^{N}\nabla f_n(w_{n,s,m}^k)-\nabla f(\theta^k)+\nabla f(\theta^k) \bigg\rangle\bigg]\\
&=\delta\EX\bigg[\bigg\langle \nabla f(\theta^k), -\frac{1}{N}\sum_{n=1}^{N}\nabla f_n(w_{n,s,m}^k) +\nabla f(\theta^k) \bigg\rangle\bigg]-\delta\EX\bigg[\bigg\langle\nabla f(\theta^k), \nabla f(\theta^k)\bigg\rangle\bigg]\\
&\leq \frac{\delta}{2}\EX\bigg[\bigg\|\nabla f(\theta^k)\bigg\|^2\bigg] - \delta\EX\bigg[\bigg\|\nabla f(\theta^k)\bigg\|^2\bigg]+\frac{\delta}{2}\EX\bigg[\bigg\|\nabla f(\theta^k)- \frac{1}{N}\sum_{n=1}^{N}\nabla f_n(w_{n,s,m}^k)\bigg\|^2\bigg] \\
&=\frac{\delta}{2}\EX\bigg[\bigg\|\nabla f(\theta^k)\bigg\|^2\bigg] - \delta\EX\bigg[\bigg\|\nabla f(\theta^k)\bigg\|^2\bigg] + \frac{\delta}{2}\EX\bigg[\bigg\|\frac{1}{N}\sum_{n=1}^{N}\big[\nabla f_n(\theta^k)- \nabla f_n(w_{n,s,m}^k)\big]\bigg\|^2\bigg]\\
&\leq\frac{\delta}{2N}\sum_{n=1}^{N}\EX\bigg[\bigg\|\nabla f_n(\theta^k)- \nabla f_n(w_{n,s,m}^k)\bigg\|^2\bigg]-\frac{\delta}{2}\EX\bigg[\bigg\|\nabla f(\theta^k)\bigg\|^2\bigg]\\
&\leq\frac{\delta \mathsf{L}^2}{2N}\sum_{n=1}^{N}\EX\bigg[\bigg\|\theta^k - w_{n,s,m}^k\bigg\|^2\bigg] -\frac{\delta}{2}\EX\bigg[\bigg\|\nabla f(\theta^k)\bigg\|^2\bigg]\\
&=\frac{\delta \mathsf{L}^2}{2N}\sum_{n=1}^{N}\EX\bigg[\bigg\|\sum_{m^\prime=0}^{m-1}\delta v_{n,s,m^\prime}^k \bigg\|^2\bigg] -\frac{\delta}{2}\EX\bigg[\bigg\|\nabla f(\theta^k)\bigg\|^2\bigg]\\
&\leq \frac{\delta^3 \mathsf{L}^2}{2N}\sum_{n=1}^{N}\EX\bigg[\bigg\|\sum_{m^\prime=0}^{m-1} v_{n,s,m^\prime}^k \bigg\|^2\bigg] -\frac{\delta}{2}\EX\bigg[\bigg\|\nabla f(\theta^k)\bigg\|^2\bigg]\\
&\leq \frac{\delta^3 \mathsf{L}^2(M-1)}{2N}\sum_{n=1}^{N}\sum_{m^\prime=0}^{m-1}\EX\bigg[\bigg\| v_{n,s,m^\prime}^k \bigg\|^2\bigg]-\frac{\delta}{2}\EX\left[\Big\|\nabla f(\theta^k)\Big\|^2\right]
\end{align*}
Which concludes the proof. Where in the third inequality, we use the smoothness property of functions. For rest of inequalities we use Jensen's inequality.
\end{proof}
We are now ready to present the proof of Theorem~\ref{thm::main}.

\medskip
\begin{proof}[Proof of Theorem\ref{thm::main}]
Our proof is based on the smoothness of the cost function.
Then, from the smoothness, we can write the following inequality:
\begin{align}\label{eq::smooth}
    \EX[f(\theta^{k+1})|\theta^k] \leq f(\theta^k)&+\langle\nabla f(\theta^k), \EX[\theta^{k+1} - \theta^k|\theta^k] \rangle \nonumber \\
    &+\frac{\mathsf{L}}{2}\EX\big[\|\theta^{k+1} - \theta^k\|^2\big| \theta^k\big] 
\end{align}

From the proposed algorithm, we can write the following:
\begin{align}\label{eq::1}
\theta^{k+1} - \theta^k  &= \frac{1}{N}\sum_{n=1}^{N} \frac{\vect{1}_n^k}{p_n^k}(w_{n,S-1,M}^k -\tilde{w}_{n,0}^k ) \nonumber\\
 &=-\frac{\delta}{N}\sum_{n=1}^{N}\frac{\vect{1}_n^k}{p_n^k}\sum_{s=0}^{S-1}\sum_{m=0}^{M-1}v_{n,s,m}^{k}
\end{align}

From \eqref{eq::smooth} and \eqref{eq::1}, we have the following:
\begin{multline}\label{eq::eqq}
 \EX[f(\theta^{k+1})|\theta^k] \leq f(\theta^k)+\langle\nabla f(\theta^k), \EX[\theta^{k+1} - \theta^k|\theta^k] \rangle 
    +\frac{\mathsf{L}}{2}\EX\big[\|\theta^{k+1} - \theta^k\|^2\big| \theta^k\big]  \\
    =f(\theta^k) -\delta \bigg\langle \nabla f(\theta^k), \EX\bigg[\frac{1}{N}\sum_{n=1}^{N}\frac{\vect{1}_n^k}{p_n^k}\sum_{s=0}^{S-1}\sum_{m=0}^{M-1}v_{n,s,m}^{k}\bigg| \theta^k\bigg]\! \bigg\rangle\\
    +\frac{\delta^2 \mathsf{L}}{2N^2}\EX\bigg[\bigg\|\sum_{n=1}^{N}\frac{\vect{1}_n^k}{p_n^k}\sum_{s=0}^{S-1}\sum_{m=0}^{M-1}v_{n,s,m}^{k}\bigg\|^2\bigg| \theta^k\bigg]\\
    =f(\theta^k) -\delta \bigg\langle \nabla f(\theta^k), \frac{1}{N}\!\!\sum_{n=1}^{N}\sum_{s=0}^{S-1}\sum_{m=0}^{M-1}\!\!\EX\bigg[\nabla f_n(w_{n,s,m}^k)\bigg|\theta^k\bigg]\! \bigg\rangle \\
    +\frac{\delta^2 \mathsf{L}}{2N^2}\EX\bigg[\bigg\|\sum_{n=1}^{N}\frac{\vect{1}_n^k}{p_n^k}\sum_{s=0}^{S-1}\sum_{m=0}^{M-1}v_{n,s,m}^{k}\bigg\|^2\bigg| \theta^k\bigg]\\
    =f(\theta^k) -\delta\!\sum_{s=0}^{S-1}\!\sum_{m=0}^{M-1}\EX\bigg[\!\bigg\langle \!\nabla f(\theta^k), \frac{1}{N}\!\sum_{n=1}^{N}\!\nabla f_n(w_{n,s,m}^k)\bigg|\theta^k\bigg\rangle\bigg] \\
    +\frac{\delta^2 \mathsf{L}}{2N^2}\EX\bigg[\bigg\|\sum_{n=1}^{N}\frac{\vect{1}_n^k}{p_n^k}\sum_{s=0}^{S-1}\sum_{m=0}^{M-1}v_{n,s,m}^{k}\bigg\|^2\bigg| \theta^k\bigg]    
\end{multline}
In the second equality, we use the fact that $\EX[\vect{1}_n^k | \theta^k] = \EX[\vect{1}_n^k] = p_n^k$ and $\EX[v_{n,s,m}^{k}|\theta^k]=\EX[\nabla f_n(w_{n,s,m}^k)|\theta^k]$

By using \eqref{eq::upperbound} and plugging back into \eqref{eq::eqq}, we have:
\begin{multline}\label{eq::expect}
\EX[f(\theta^{k+1})|\theta^k] \leq f(\theta^k) 
-\delta\sum_{s=0}^{S-1}\sum_{m=0}^{M-1}\!\!\EX\bigg[\bigg\langle \nabla f(\theta^k), \frac{1}{N}\sum_{n=1}^{N}\nabla f_n(w_{n,s,m}^k)\bigg|\theta^k\bigg\rangle\bigg] \\
    +\frac{\mathsf{L}SM\delta^2}{2N}\sum_{n=1}^{N}\frac{1}{p_n^k}\sum_{s=0}^{S-1}\sum_{m=0}^{M-1}\EX\bigg[\bigg\|v_{n,s,m}^{k}\bigg\|^2\bigg| \theta^k\bigg]  
\end{multline}
Applying total expectation on both sides of \eqref{eq::expect} we have:
\begin{multline}
    \EX[f(\theta^{k+1})] \leq \EX[f(\theta^k)] -\delta\sum_{s=0}^{S-1}\sum_{m=0}^{M-1}\EX\bigg[\bigg\langle \nabla f(\theta^k), \frac{1}{N}\sum_{n=1}^{N}\nabla f_n(w_{n,s,m}^k)\bigg\rangle\bigg] \\
    +\frac{\mathsf{L}SM\delta^2}{2N}\sum_{n=1}^{N}\frac{1}{p_n^k}\sum_{s=0}^{S-1}\sum_{m=0}^{M-1}\EX\bigg[\bigg\|v_{n,s,m}^{k}\bigg\|^2\bigg].  
    \label{eq::11}
\end{multline}
By using \eqref{eq::12}, it follows from~\eqref{eq::11} that
\begin{multline}
\EX[f(\theta^{k+1})] \leq \EX[f(\theta^k)] +\frac{\delta^3 \mathsf{L}^2 (M-1)}{2N} \Bigg[ \sum_{n=1}^{N}\sum_{s=0}^{S-1}\sum_{m=0}^{M-1}\sum_{m^\prime=0}^{m-1}\EX\bigg[\bigg\| v_{n,s,m^\prime}^k \bigg\|^2\bigg]\Bigg] -\frac{\delta S M}{2}\EX\bigg[\bigg\|\nabla f(\theta^k)\bigg\|^2\bigg] \\
    +\frac{\mathsf{L}SM\delta^2}{2N}\sum_{n=1}^{N}\frac{1}{p_n^k}\sum_{s=0}^{S-1}\sum_{m=0}^{M-1}\EX\bigg[\bigg\|v_{n,s,m}^{k}\bigg\|^2\bigg] 
\end{multline}
Dividing both sides of equation over $\frac{1}{K}$. Then summing both sides from $k=0$ to $k=K-1$. We have the following:
\begin{multline}
\frac{1}{K}\sum_{k=0}^{K-1}\EX\bigg[ \bigg \|\nabla f(\theta^k) \bigg\|^2 \bigg]\leq \!\!\frac{2}{\delta K S M}(\EX[f(\theta^0] -\EX[f(\theta^k]))
+\frac{\delta^2 \mathsf{L}^2 (M-1)}{K S M N} \Bigg[ \sum_{k=0}^{K-1}\sum_{n=1}^{N}\sum_{s=0}^{S-1}\sum_{m=0}^{M-1} \sum_{m^\prime=0}^{m-1}\EX\bigg[\bigg\| v_{n,s,m^\prime}^k \bigg\|^2\bigg]\\
+\frac{\delta \mathsf{L}}{KN}\sum_{k=0}^{K-1}\sum_{n=1}^{N}\frac{1}{p_n^k}\sum_{s=0}^{S-1}\sum_{m=0}^{M-1}\EX\bigg[\bigg\|v_{n,s,m}^{k}\bigg\|^2\bigg] \Bigg]
\end{multline}
\begin{multline}
\frac{1}{K}\sum_{k=0}^{K-1}\EX\bigg[ \bigg \|\nabla f(\theta^k) \bigg\|^2 \bigg]\leq \frac{2}{\delta K S M}(f(\theta^0) -f^\star)
+\frac{\delta^2 \mathsf{L}^2 (M-1)}{K S M N} \Bigg[ \sum_{k=0}^{K-1}\sum_{n=1}^{N}\sum_{s=0}^{S-1}\sum_{m=0}^{M-1}\sum_{m^\prime=0}^{m-1}\EX\bigg[\bigg\| v_{n,s,m^\prime}^k \bigg\|^2\bigg] \Bigg] \\
+\frac{\delta \mathsf{L}}{KN}\sum_{k=0}^{K-1}\sum_{n=1}^{N}\frac{1}{p_n^k}\sum_{s=0}^{S-1}\sum_{m=0}^{M-1}\EX\bigg[\bigg\|v_{n,s,m}^{k}\bigg\|^2\bigg] 
\end{multline}
Which concludes the proof.
\end{proof}

\newcounter{mycounter}
\renewcommand{\themycounter}{A.\arabic{mycounter}}
\newtheorem{thmapp}[mycounter]{Theorem}

\end{document}